\newtheorem{theorem}{Theorem}
\newtheorem{lemma}[theorem]{Lemma}
\newtheorem{claim}[theorem]{Claim}
\theoremstyle{definition}
\newtheorem{defn}[theorem]{Definition}
\newcommand{\one}[1]{\mathbbm{1}\{#1\}}
\newcommand{\A}{\mathcal{A}}
\newcommand{\E}[1]{\mathbb{E}\left[ #1 \right]}
\newcommand{\X}{\mathbf{X}}
\DeclareMathOperator{\erf}{erf}
\newcommand{\ud}{\,\mathrm{d}}
\newcommand{\bigO}{\mathcal{O}}
\title{\bf Aggregating partial rankings with applications\\to peer grading in massive online open courses\thanks{This work was partially supported by the European Social Fund and Greek national funds through the research funding program Thales on ``Algorithmic Game Theory'', by the Caratheodory research grant E.114 from the University of Patras, and by the COST Action IC1205 on ``Computational Social Choice''.}}
\author{Ioannis Caragiannis\thanks{Computer Technology Institute ``Diophantus'' \& Department of Computer Engineering and Informatics, University of Patras, 26504 Rion, Greece. Email: {\tt caragian@ceid.upatras.gr}} 
\and 
George A. Krimpas\thanks{Department of Computer Engineering and Informatics, University of Patras, 26504 Rion, Greece. Email: {\tt krimpas@ceid.upatras.gr}} 
\and Alexandros A. Voudouris\thanks{Department of Computer Engineering and Informatics, University of Patras, 26504 Rion, Greece. Email: {\tt voudouris@ceid.upatras.gr}}
}
\date{}
\begin{document} 

\allowdisplaybreaks

\maketitle

\begin{abstract}
We investigate the potential of using ordinal peer grading for the evaluation of students in massive online open courses (MOOCs). 
According to such grading schemes, each student receives a few assignments (by other students) which she has to rank. Then, a global ranking (possibly translated into numerical scores) is produced by combining the individual ones.
This is a novel application area for social choice concepts and methods where the important problem to be solved is as follows: how should the assignments be distributed so that the collected individual rankings can be easily merged into a global one that is as close as possible to the ranking that represents the relative performance of the students in the assignment? Our main theoretical result suggests that using very simple ways to distribute the assignments so that each student has to rank only $k$ of them, a Borda-like aggregation method can recover a $1-\bigO(1/k)$ fraction of the true ranking when each student correctly ranks the assignments she receives. Experimental results strengthen our analysis further and also demonstrate that the same method is extremely robust even when students have imperfect capabilities as graders. We believe that our results provide strong evidence that ordinal peer grading can be a highly effective and scalable solution for evaluation in MOOCs.
\end{abstract}

\section{Introduction}\label{sec:intro}
Massive online open courses (MOOCs) such as Coursera and EdX have currently become a trend and have attracted significant funding from VCs and support from leading academics. Their vision is to use the Internet and provide (to huge numbers of students) an educational experience that is typical in courses targeted to small audiences in top-class Universities. Whether MOOCs will become the next big business over the Internet strongly depends on whether they will satisfy the fundamental need for easy and cheap access to high quality education without restrictions. An apparent bottleneck for their full deployment and success is the fact that assessment and grading with the classical means is extremely costly. A typical approach is to use closed type questions in exams or assignments so that grading can be done automatically. This is highly unsatisfactory when, as part of a course, one would like to evaluate the students' ability of proving a mathematical statement, or expressing their critical thinking over an issue, or even demonstrating their creative writing skills. Evaluating this ability is inherently a {\em human computation} task \cite{LvA11}.

An approach that has been proposed is to outsource the grading task to the students participating in the exam or assignent themselves; for example, they can be required to grade (a small number of) their peers' assignments as part of their own assignment \cite{PHC+13}. Of course, allowing the students to grade using cardinal scores is risky; they are not experienced in assessing their peers' performance in absolute terms\footnote{This is in contrast to the main assumption behind the reviewing systems that are used in academic conferences.} and they may have strong incentives to assign low scores to everybody in order to increase their own relative success in the assignment. An alternative that sounds feasible is to ask each student to provide a ranking of a small number of her peers' assignments and then compute a global ranking by merging the partial ones; this is known as {\em ordinal peer grading} (e.g., see \cite{RJ14,SBP+13}). Can this global ranking be in accordance to the objective comparison of students in terms of their performance in the assignment? Which are the necessary methods for this computation? And how accurate can this global ranking be? In this paper, we address these questions and provide both conceptual and technical answers. 

Merging individual rankings into a global one is the main goal of voting rules from {\em social choice theory}, where a set of voters provide full rankings over the available alternatives and a voting rule has to transform this input into a winning alternative or an aggregate ranking of the alternatives. At first glance, ordinal peer grading seems to be a natural application area for classical voting theory. Interestingly, its particular characteristics deviate from those usually studied in the voting literature. First, each voter is also an alternative. This is a rare assumption in social choice in works that focus mostly on incentives issues (e.g., see \cite{AFPT11, HM13}). Second, the input consists of partial rankings over small subsets of alternatives. The closest such approach in social choice is known as preference elicitation \cite{CS02} where simple queries are asked to each voter about their preferences; for example, in top-$k$ elicitation \cite{FO14}, each voter provides the partial ranking of the $k$ alternatives she likes the most. The (complexity) effects of using only partial rankings in voting have been studied under the possible and necessary winner problems (e.g., see \cite{XC11}). An important characteristic of ordinal peer grading is that the partial rankings have the same size and that each assignment is given to the same number of graders. And finally, there is an objective way to assess the ordinal peer grading outcome by comparing it to the objective comparison of the students in terms of performance in the assignment. This is close in spirit to recent approaches that use voting in order to {\em learn} a ground truth \cite{CPS13,CK12}, such as a winning alternative or an underlying true ranking. In our work, we deviate from these studies as well since we aim to learn the ground truth only approximately. So, ordinal peer grading is a setting where ideas and analysis techniques from human computation, voting, and learning are blended together in novel ways.

In particular, our model uses a grading scheme that asks each student to rank the assignments of $k$ other students. For fairness reasons, we restrict ourselves to grading schemes that distribute each assignment to exactly $k$ students. Unlike recent studies \cite{RJ14,SBP+13}, we investigate the potential of applying ordinal peer grading exclusively, i.e., without involving any professionals in grading. We assume that there is an underlying true (strict) ranking of the assignments (the ground truth) and we would like to recover correctly an as high as possible fraction of it using input from the students. We have two scenaria that determine the input. In the first one, we assume that, after the students have submitted their assignments, the instructor announces indicative solutions and grading instructions. Here, we make the simplifying assumption that each student grades the assignments in her bundle consistently to the ground truth (perfect grading). In a second scenario that is also assumed in \cite{RJ14}, we assume that grading is performed without any guidance by the instructor. Here, the natural assumption is that the quality of a student determines both her performance in the assignment and her grading ability. We have mostly focused on simple rank aggregation rules such as the adaptation of the classical Borda count \cite{Borda}, where the partial ranking provided by each grader is interpreted as follows: $k$ points are given to the assignment ranked first, $k-1$ points to the one ranked second, and so on. The global ranking is then computed by ordering the assignments in decreasing order of these Borda scores. We have also considered more aggregation rules which are described in detail in Sections \ref{sec:prelim} and \ref{sec:exp}.

Our technical contributions can be summarized as follows. In Section \ref{sec:borda}, we present a theoretical analysis of Borda when the partial rankings on input are consistent to the ground truth. We prove that using any way to distribute $k$ assignments per student, Borda recovers correctly an expected fraction of $1-\bigO(1/\sqrt{k})$ of the pairwise relations in the ground truth. If the distribution of the assignments has some particularly desired simple structure, an even better guarantee of $1-\bigO(1/k)$ is obtained. The independence of these results from the number of students is rather surprising. Our proofs exploit the beautiful theory of {\em martingales} in order to cope with dependencies between random variables that are involved in the analysis. To the best of our knowledge, this is the first application of martingales in social choice. We also present extensive experiments with Borda and other aggregation rules (in Section \ref{sec:exp}). Our findings further justify the robustness of Borda, even in the scenario of imperfect grading. For example, Borda is shown to recover more than $88\%$ of the ground truth by distributing $8$ assignments per student (with students having highly varying grading capabilities). Here, we borrow ideas from recent studies on voting and learning (e.g., \cite{CPS13}) and use noise models for the generation of random partial rankings whose distance from the ground truth depends probabilistically on the quality of the graders. En route, we provide some intuition about the problem (in Section \ref{sec:prelim}). We conclude with a discussion of (the many) possible extensions of our work in Section \ref{sec:discussion}.

\section{Problem statement, terminology and notation}\label{sec:prelim}
Let $\A$ denote a universe of $n$ {\em elements}. A collection ${\cal B}$ of subsets of $\A$ is called a {\em grading scheme} with parameters $n$ and $k\leq n$ (or $(n,k)$-grading scheme) if ${\cal B}$ consists of $n$ subsets of $\A$ called {\em bundles}, each bundle has size $k$, and each element of $\A$ belongs to exactly $k$ subsets of ${\cal B}$. To see the relation to peer grading, we can view the elements of the universe $\A$ as the $n$ papers of students participating in an assignment. Each bundle contains $k$ papers that will be graded by a distinct student. Of course, we require that no student will grade her own paper. This can be easily achieved by a matching computation.\footnote{Indeed, for every student $i$, there are $n-k$ bundles that do not contain her paper. Then, the bipartite graph that represents the information about the bundles that a student is allowed to grade is regular and, by Hall's matching theorem, has a perfect matching. This matching can be used to assign bundles of papers to students.}

Alternatively, we can represent the $(n,k)$-grading scheme with a bipartite graph $G=(U,V,E)$ which we will call $(n,k)$-{\em bundle graph}. The set of nodes $U$ has size $n$ and contains a distinct node for each element of $\A$. The set of nodes $V$ has size $n$ too and contains a node for each bundle of ${\cal B}$. The set of edges $E$ contains an edge $(u,v)$ connecting node $u\in U$ with node $v\in V$ if and only if the element corresponding to node $u$ belongs to the bundle corresponding to node $v$. Clearly, an $(n,k)$-bundle graph is $k$-regular. Actually, every $k$-regular bipartite graph has the same number $n$ of nodes in both bipartition sides and be used as an $(n,k)$-bundle graph. 

A {\em partial ranking} $\succ_b$ associated with a bundle $b\in {\cal B}$ is simply a ranking of the elements $b$ contains. We remark that $\succ_b$ is undefined for elements not belonging to ${\cal B}$. A {\em profile} is simply the collection that contains the partial ranking $\succ_b$ for each bundle $b$ of ${\cal B}$. An {\em aggregation rule} takes as input a profile of partial rankings and computes a complete ranking of all elements. A typical example is the following rule that extends Borda count from classical voting theory. Each element gets a score from each appearance in a partial ranking. The {\em Borda score} of an element is then the sum of the scores from all partial rankings. Within each partial ranking, a score of $k$ is given to the element that is ranked first, a score of $k-1$ to the element that is ranked second, and so on. The final complete ranking is computed by sorting the elements in decreasing order in terms of their Borda scores. We will use the term {\em Borda} to refer to this aggregation rule. Even though one can think of several different ways to resolve ties, we simply ignore ties in our theoretical analysis (Section~\ref{sec:borda}) and use uniformly random tie-breaking in our experiments (Section~\ref{sec:exp}). 

We have also considered another aggregation rule which we call {\em Random Serial Dictatorship} (RSD). The term is inspired by the well-known mechanism for house allocation markets \cite{AS98}. A complete ranking is computed gradually starting from an initially empty one. In a first {\em serial phase}, the partial rankings are considered in a random order. When considering a partial ranking, we copy to the global one all the pairwise relations that do not contradict (i.e., do not form cycles with) relations copied earlier. When all partial rankings have been considered, the global partial ranking is augmented by the pairwise relations implied due to transitivity (e.g., the pairwise relations $x\succ y$ and $y\succ z$ copied from two partial rankings imply that $x\succ z$ as well). Then, we use a second {\em random completion phase} to complete the global ranking as follows. In each step, we pick a random pair of elements whose relation has not been decided so far. We make this decision randomly and update all pairwise relations that this decision and the existing ones imply due to transitivity. We continue this way until all pairwise relations have been decided.

We are now ready to give the statement of the problem that we consider more formally. In general, we would like to use the grading schemes and aggregation rules in order to {\em learn} an unknown {\em ground truth}, i.e., a ranking of the elements representing their relative quality. A first question is whether the ground truth can be learnt {\em with certainty} when the partial rankings are consistent to it. In other words, we ask for an {\em order-revealing} grading scheme (and a corresponding order-revealing bundle graph) which defines the bundles in such a way that the partial rankings contain enough information so that all pairwise relations in the ground truth can be recovered with certainty. Unfortunately, order-revealing grading schemes have severe limitations. In particular, they should have the following too demanding property: for every pair of elements, there should be some bundle that contains both of them.\footnote{This property essentially asks for a $k$-regular bipartite graph of diameter at most $3$. Our order-revealing bundle graphs are known as Moore bipartite graphs, i.e., they are the smallest bipartite graphs of degree at least $k$ and of diameter at most $3$; see \cite{MS13} for a detailed survey on the degree-diameter problem.} Indeed, let ${\cal B}$ be an order-revealing grading scheme over a universe $\A$ of $n$ elements and assume that there are two elements $x$ and $y$ so that no bundle contains both $x$ and $y$. Now, consider a ranking $\succ$ that has $x$ and $y$ in the first two positions and let $\succ'$ be the ranking that differs from $\succ$ only in the order of $x$ and $y$. Clearly, the partial rankings within the bundles are identical in both cases and, as a result, there is no way to identify whether the ground truth is the ranking $\succ$ or the ranking $\succ'$. Notice that the above property implies that RSD combined with order-revealing grading schemes recovers the ground truth with certainty (and does not have to run the random completion phase). This is not the case for Borda unless any two elements co-exist in the same number of bundles (like in the bundle graphs constructed below).

Clearly, the maximum number of elements that belong to a bundle with $x$ is $k(k-1)$ and this number should be at least $n-1$ if we want $x$ to belong to some bundle with every other element. This immediately implies that order-revealing grading schemes should have bundles of size $\Omega(\sqrt{n})$. In sharp contrast to this disappointing observation, we will see that the goal of {\em approximate} order-revealing grading schemes is a very feasible one and leads to effective and scalable grading solutions in theory and practice. Interestingly, many of our findings make use of bundle graphs that are order-revealing; this is why we have included the following explicit construction of order-revealing grading schemes for particular values of the parameters $n$ and $k$ here.

Let $p\geq 1$ be a prime and let $\A$ be a universe with $n=p^2+p+1$ elements. We will construct the grading scheme ${\cal B}$ in which each bundle has size exactly $k=p+1$. Observe that these values for $n$ and $k$ satisfy the lower-bound condition mentioned above with equality. Rename the elements of $\A$ as $\A = \{u\} \cup \{v_i|i=0,...,p-1\} \cup \{w_{i,j}|i=0,...,p-1, j=0,...,p-1\}$ and define the bundles of ${\cal B}$ as follows:
\begin{itemize}
\item $F=\{u, v_0, v_1, ..., v_{p-1}\}$;
\item For $i=0,...,p-1$, $R_i=\{u\} \cup \{w_{i,j}| j=0,...,p-1\}$;
\item For $i=0,...,p-1$ and $s=0,...,p-1$, $C_{i,s}=\{v_s\} \cup \{w_{j,(i+j \cdot s)\bmod{p}}|j=0,...,p-1\}$.
\end{itemize}

An order-revealing $(7,3)$-bundle graph is depicted in Figure \ref{fig:example1}; it represents the following grading scheme ${\cal B}$. The underlying universe is $\A=\{1, 2, 3, 4, 5, 6, 7\}$ and ${\cal B}$ has the following seven $3$-sized bundles: $\{1, 2, 3\}$, $\{1, 4, 5\}$, $\{1, 6, 7\}$, $\{2, 4, 6\}$, $\{2, 5, 7\}$, $\{3, 4, 7\}$, and $\{3, 5, 6\}$. The numbering of nodes in set $V$ indicates an assignment of bundles to students for grading and, hence, nodes with the same number are not adjacent.

\begin{figure}[ht]
\centering
\includegraphics[scale=0.5]{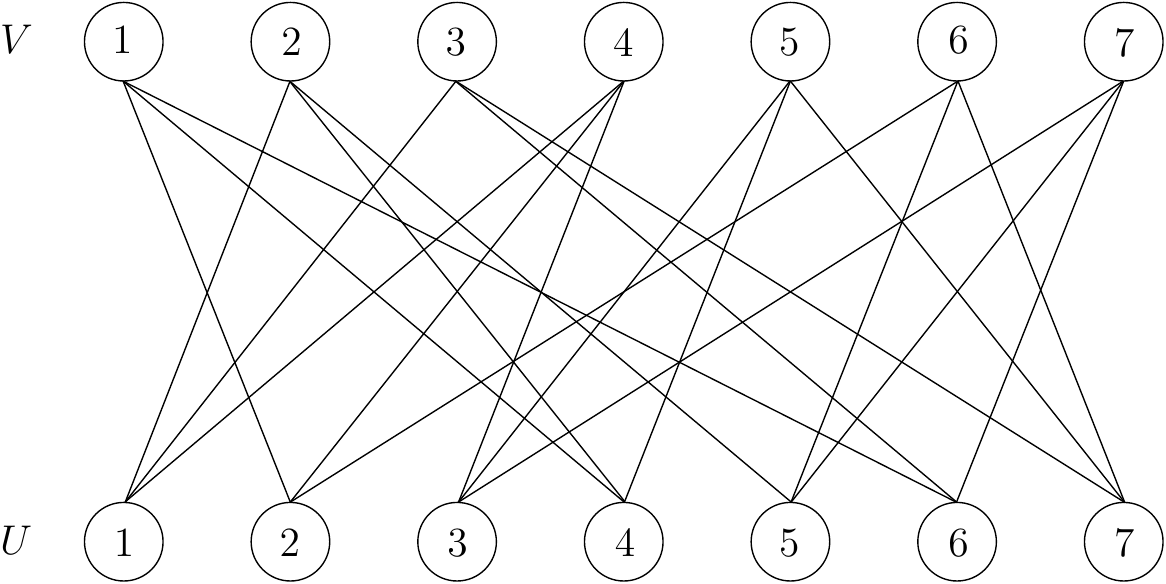}
\caption{An order-revealing $(7,3)$-bundle graph.}
\label{fig:example1}
\end{figure}

We prove the correctness of our construction using basic facts from number theory. 
\begin{lemma}\label{lem:ORGS-upper}
The above construction yields an order-revealing grading scheme.
\end{lemma}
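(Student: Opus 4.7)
The plan is to verify two things: (i) the construction is a valid $(n,k)$-grading scheme, meaning each element lies in exactly $k=p+1$ bundles (bundle sizes are immediate by inspection), and (ii) the scheme is order-revealing, meaning every pair of distinct elements co-occurs in at least one bundle. The natural way to read the construction is as the point/line incidence of the projective plane $PG(2,p)$: $u$ plays the role of a distinguished point, the $v_s$'s are the points of a distinguished line $F$, and the $w_{i,j}$'s are the remaining affine points, with $R_i$ and $C_{i,s}$ playing the roles of the two families of lines through $u$ and through each $v_s$ respectively. So the required properties should correspond to the defining incidence axioms of a projective plane.

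For (i), I would compute the multiplicity of each element directly. The element $u$ appears in $F$ and in each $R_i$, giving $1+p$ bundles. Each $v_s$ appears in $F$ and in each $C_{i,s}$ as $i$ ranges over $\{0,\dots,p-1\}$, also giving $1+p$. For $w_{i,j}$, it belongs to $R_i$ and, for every fixed $s$, belongs to exactly one $C_{i',s}$ determined by $i' \equiv j - i\cdot s \pmod{p}$; summing over $s$ yields $1+p$. Each count equals $k=p+1$, as required.

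For (ii), I would enumerate the possible pair types and exhibit a bundle containing each. The easy cases are: $(u,v_s)$ lie together in $F$; $(u,w_{i,j})$ lie in $R_i$; $(v_s,v_{s'})$ lie in $F$; $(v_s,w_{i,j})$ lie in $C_{i',s}$ for $i' \equiv j - i\cdot s \pmod p$; and two $w$-elements sharing the first index, $(w_{i,j_1},w_{i,j_2})$ with $j_1\ne j_2$, lie in $R_i$. The interesting case is $(w_{i_1,j_1},w_{i_2,j_2})$ with $i_1\ne i_2$: we need to find $i,s$ with $j_1 \equiv i+i_1 s$ and $j_2 \equiv i+i_2 s \pmod{p}$. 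Subtracting yields $(i_1-i_2)s \equiv j_1-j_2 \pmod p$, and since $p$ is prime and $i_1\ne i_2$ in $\{0,\dots,p-1\}$, the coefficient $i_1-i_2$ is invertible modulo $p$, so $s$ is determined uniquely and then $i$ follows. This is the only place where primality is genuinely used, and it is the main (and only real) obstacle in the argument.

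Combining (i) and (ii), the collection $\{F\}\cup\{R_i\}\cup\{C_{i,s}\}$ is a grading scheme in which every pair of elements co-occurs in some bundle, so every pairwise relation in any ground-truth ranking is witnessed by at least one partial ranking and can be recovered with certainty; i.e., the scheme is order-revealing.
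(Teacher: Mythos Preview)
Your proof is correct and follows essentially the same approach as the paper: you verify that each element lies in exactly $p+1$ bundles and then do the same case analysis on pairs, with primality used precisely at the step where one solves $(i_1-i_2)s\equiv j_1-j_2\pmod p$ to place two $w$-elements with different first indices into a common $C_{i,s}$. The only cosmetic difference is that the paper also notes explicitly that the total number of bundles is $1+p+p^2=n$, which you left implicit.
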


\begin{proof}
Clearly, the above grading scheme consists of $n=p^2+p+1$ bundles of size $k=p+1$.
Also, observe that each element belongs to exactly $p+1$ bundles. Indeed, element $u$ belongs to sets $F$ and $R_i$ for $i=0, 1, ..., p-1$. Element $v_s$ belongs to sets $F$ and $C_{i,s}$ for $i=0, 1, ..., p-1$. Element $w_{i,j}$ belongs to sets $R_i$ and $C_{t,s}$ such that $j = (t+i \cdot s)\bmod{p}$.
  
We complete the proof by showing that for every pair $x,y \in \A$, there exists a bundle that contains both $x$ and $y$.
This is clearly true if one of $x$ and $y$ is $u$ or if both $x$ and $y$ belong to $F$ or to some $R_i$, for $i=0,...,p-1$. So, there are two more cases to be considered. 
First, assume that $x=v_s$ for $s \in \{0,...,p-1\}$ and $y=w_{j,\ell}$. 
Then, there exists an $i \in \{0,...,p-1\}$ such that $i+j \cdot s = \ell \pmod{p}$ and, hence, both $x$ and $y$ belong to set $C_{i,s}$. It remains to consider the case where $x=w_{i_1,j_1}$ and $y=w_{i_2,j_2}$ with $0 \leq i_1 < i_2 \leq p-1$. 
Then, there exists a unique $s \in \{0,...,p-1\}$ such that $(i_2-i_1)\cdot s=(j_2-j_1) \bmod{p}$. This follows from the facts that $p$ is prime and that any linear equation of the form $a\cdot z = b \pmod{n}$ has $\gcd(a,n)$ solutions if and only if $\gcd(a,n)$ divides $b$.
Now, set $i=(j_1-i_1\cdot s) \bmod{p}$ and observe that $i_1=(i+j_1 \cdot s)\bmod{p}$ and $i_2=(i+j_2 \cdot s)\bmod{p}$. Hence, both $x$ and $y$ belong to $C_{i,s}$ and the proof is complete.
\end{proof}

We now relax our requirements and seek for an approximate order-revealing grading scheme. Our aim is to use a bundle graph of simple structure and of very low (i.e., independent of $n$) degree and still be able to correctly recover a high fraction of the $n\choose 2$ pairwise relations in the ground truth. Our grading schemes will be randomized in the sense that we will always randomly permute the elements before associating them to nodes of set $U$ of the bundle graph; let $\pi: U\rightarrow {\cal A}$ denote this bijection (or permutation). Sometimes, in our experiments, the bundle graphs we use are themselves random. Much of our work (i.e., our theoretical analysis in Section \ref{sec:borda} as well as the first among the two sets of experiments reported in Section \ref{sec:exp}) has focused on the scenario where the partial rankings are consistent to the ground truth. Our second set of experiments in Section \ref{sec:exp} uses partial rankings that deviate from the ground truth according to a noise model.

\section{Analysis of Borda}\label{sec:borda}
In this section, we present our theoretical results. We assume that the $(n,k)$-bundle graph $G=(U,V,E)$ has $k\geq 3$ and $n\geq 3k(k-1)+2$. These are technical assumptions that do not affect the applicability of our results; recall that, in practice, we would like $n$ and $k$ to be huge and very small, respectively. Surprisingly, Borda correctly recovers a very large fraction of the ground truth as the next statement suggests. 

\begin{theorem}\label{thm:main}
When Borda is applied on partial rankings that are consistent to the ground truth, the expected fraction of correctly recovered pairwise relations is at least $1-\bigO\left(1/k\right)$ when the $(n,k)$-bundle graph has girth at least $6$, and at least $1-\bigO\left(1/\sqrt{k}\right)$ in general. 
\end{theorem}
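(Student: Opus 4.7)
I would bound the expected number of pairs on which Borda disagrees with the ground truth, then divide by $\binom{n}{2}$. View the random bijection $\pi$ as a uniform random permutation $\sigma : U \to [n]$ on the element nodes of the bundle graph (smaller rank $=$ better), write $c(u, u')$ for the number of bundles containing both $u, u' \in U$ (so $\sum_{u' \neq u} c(u, u') = k(k-1)$), and observe that the girth-$6$ condition is equivalent to $c(u, u') \leq 1$ for every pair, since any two common bundles of $u$ and $u'$ would close a $4$-cycle in the bundle graph. A one-line rewrite shows that the Borda score of the element at $u$ equals $k^2 - T(u)$, where
\[
T(u) = \sum_{u' \neq u} c(u, u')\, \mathbbm{1}[\sigma(u') < \sigma(u)],
\]
so Borda errs on the pair $(u, v)$ with $\sigma(u) < \sigma(v)$ iff $D := T(v) - T(u) \leq 0$.

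For a fixed pair of element nodes $(u, v)$, I would condition on $\sigma(u) = s$ and $\sigma(v) = t$ with $s < t$. A direct computation using the uniform distribution on $[n] \setminus \{s, t\}$ for the remaining ranks yields
\[
\mathbb{E}[D \mid \sigma(u) = s, \sigma(v) = t] = c(u, v) + \frac{(t - s - 1)\bigl(k(k-1) - c(u, v)\bigr)}{n - 2} = \Omega\bigl(k^2 (t-s)/n\bigr)
\]
in all relevant regimes. For the variance, the indicators $\mathbbm{1}[\sigma(u') < s]$ are weakly negatively correlated through the permutation constraint; to handle these dependencies cleanly I would pass to the Doob martingale that reveals the remaining ranks one element node at a time. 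The step exposing $\sigma(u')$ changes the conditional expectation of $D$ by at most $\bigO(c(u, u') + c(v, u'))$, so the sum of squared increments is controlled by $\sum_{u'} (c(u, u') + c(v, u'))^2$, plus a lower-order $\bigO(k^4 / n)$ correction absorbing the weak correlations. Under girth at least $6$ each $c$ is $0$ or $1$ and $\sum_{u'} c(u, u')^2 = k(k-1)$, since every length-$4$ closed walk at $u$ in the bundle graph must simply retrace a bundle edge, giving $\mathrm{Var}[D] = \bigO(k^2)$. In general $\sum_{u'} c(u, u')^2 \leq k \cdot k(k-1)$ by the same walk-counting argument with no cycle restriction, giving $\mathrm{Var}[D] = \bigO(k^3)$.

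Chebyshev's inequality then yields a conditional mistake probability of at most $\min\{1,\, \bigO(n^2/(k^2(t-s)^2))\}$ in the girth-$6$ case and $\min\{1,\, \bigO(n^2/(k(t-s)^2))\}$ in general. Summing over ordered pairs of element nodes and over rank pairs $(s,t)$ with gap $g = t - s$ (each gap appearing $n - g$ times), the trivial regime $g \lesssim n/k$ or $g \lesssim n/\sqrt{k}$ contributes $\bigO(n^2/k)$ or $\bigO(n^2/\sqrt{k})$ expected mistakes, while the tail is summable since $\sum_{g \geq g_0} g^{-2} \leq 2/g_0$ and contributes the same order. Dividing by $\binom{n}{2}$ recovers the two claimed fractions. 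The hardest step is the variance bound: the martingale viewpoint reads the Lipschitz constants directly off the local bundle structure around $u$ and $v$ and bypasses an explicit covariance computation, and the girth-$6$ assumption is precisely the structural property that collapses the number of length-$4$ closed walks in the bundle graph, saving the factor of $k$ between the two cases.
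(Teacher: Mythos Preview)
Your proposal is correct and follows the same high-level architecture as the paper: condition on the placements of a fixed pair $(u,v)$, compute the conditional mean of the Borda-score difference, control its fluctuations via a Doob martingale whose increments are bounded by $O(c(u,u')+c(v,u'))$, and then sum the resulting tail bound over rank gaps. The paper's quantity $\theta_{u,v}$ is exactly $4\sum_{u'}(c(u,u')+c(v,u'))^2$ (the sum is supported on $S=N(N(u,v))\setminus\{u,v\}$), and its bounded-differences lemma is precisely your Lipschitz estimate; its structural Lemma~\ref{lem:bounds} is your walk-counting bound on $\sum_{u'} c(u,u')^2$.

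Where you diverge is in the concentration step and the final summation. The paper applies Azuma--Hoeffding to obtain an exponential tail $\exp\bigl(-\E{D}^2/(2\theta_{u,v})\bigr)$ and then evaluates the sum over rank gaps via a Gaussian-integral estimate $\int_0^1(1-y)e^{-(\beta y+\delta)^2}\,dy\leq \sqrt{\pi}\,(\beta+\delta)/(2\beta^2)$. You instead bound the variance by the sum of squared martingale increments, apply Chebyshev, and split the sum over gaps into a trivial regime and a $\sum g^{-2}$ tail. Both routes land on the same $1-O(1/k)$ and $1-O(1/\sqrt{k})$; yours is more elementary (no Azuma, no error-function calculus) at the price of a weaker per-pair tail that still suffices after summation. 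One small cleanup: since $D$ is measurable with respect to the ranks at nodes of $S$ alone, you can run the Doob martingale only over $S$ and drop the $O(k^4/n)$ correction entirely --- this is exactly what the paper does, and it avoids the issue that $k^4/n$ is not strictly lower order under the paper's standing assumption $n\geq 3k(k-1)+2$.
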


We prove this theorem by relating the performance of Borda only to the degree $k$ and on a quantity $\eta(G)$ that characterizes the structure of the bundle graph. For the definition of $\eta(G)$, we need some notation; this will be heavily used throughout this section. Given two nodes $u,v$ of $U$, we use $\lambda_{u,v}$ to denote their common neighbourhood in $V$, i.e., $\lambda_{u,v}=|N(u)\cap N(v)|$. Observe that $\sum_{v\in U\setminus\{u\}}{\lambda_{u,v}}=k(k-1)$ since $G$ is $k$-regular. Also, we define the quantity $\theta_{u,v}$ as $\theta_{u,v} = 4\sum_{z \in N(N(u,v))\setminus\{u,v\}}{\left(\lambda_{u,z}+\lambda_{v,z}\right)^2}$. Then, 
$$\eta(G) = \frac{1}{n(n-1)}\sum_{u,v\in U}{\sqrt{\theta_{u,v}}},$$
where the sum runs over all ordered pairs of $u,v$ in $U$. 

Intuitively, the quantity $\eta(G)$ is small when, on average, the common neighbourhood between pairs of nodes is small. The extreme case is when the common neighbourhood consists of a single node; in this case, the graph has girth\footnote{The girth of a graph is the length of its smallest cycle.} at least $6$. The next lemma provides upper bounds on $\eta(G)$ that will be useful later. 

\begin{lemma}\label{lem:bounds}
For every $k$-regular bipartite graph $G$, $\eta(G)\leq \sqrt{8k(k-1)(4k-3)}$. Every $k$-regular bipartite graph $G$ of girth at least $6$ has $\eta(G)\leq 4\sqrt{k(k-1)}$.
\end{lemma}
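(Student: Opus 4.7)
The plan is to bound $\sqrt{\theta_{u,v}}$ for each ordered pair $(u,v)$ separately by purely combinatorial arguments, and then simply average over pairs to bound $\eta(G)$. The identity that will drive everything is the $k$-regularity consequence $\sum_{z\in U\setminus\{u\}}\lambda_{u,z}=k(k-1)$ already noted in the paper (each of the $k$ neighbours of $u$ in $V$ sees $k-1$ further nodes in $U$). Adding the analogous identity for $v$, I will use that
\[
\sum_{z\in N(N(u,v))\setminus\{u,v\}}(\lambda_{u,z}+\lambda_{v,z})\leq 2k(k-1).
\]
With this sum bound, the crude inequality $\sum_z a_z^2\leq (\max_z a_z)\cdot\sum_z a_z$ applied to $a_z=\lambda_{u,z}+\lambda_{v,z}$ reduces each case of the lemma to controlling $\max_z a_z$.

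First I will handle the girth-at-least-$6$ case. Since the graph is bipartite, girth $\geq 6$ is equivalent to the absence of $4$-cycles, which forces $\lambda_{u,z}\leq 1$ for every pair of distinct $u,z\in U$: otherwise two common neighbours of $u$ and $z$ in $V$ would close into a $4$-cycle. This gives $\max_z a_z\leq 2$, so the sum bound above yields $\sum_z a_z^2\leq 4k(k-1)$, hence $\theta_{u,v}\leq 16k(k-1)$ and $\sqrt{\theta_{u,v}}\leq 4\sqrt{k(k-1)}$ pointwise. Averaging over ordered pairs yields $\eta(G)\leq 4\sqrt{k(k-1)}$.

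Next I will treat the general bound. Without any girth assumption, only the trivial $\max_z a_z\leq 2k$ (from $\lambda_{u,z}\leq|N(u)|=k$) is available. This gives $\sum_z a_z^2\leq 4k^2(k-1)$, so $\theta_{u,v}\leq 16k^2(k-1)$ pointwise. A short algebraic check shows $16k^2(k-1)\leq 8k(k-1)(4k-3)$ for every integer $k\geq 2$ (it reduces to $2k\leq 4k-3$), and the $k\leq 1$ case is trivial, so the desired pointwise estimate $\sqrt{\theta_{u,v}}\leq\sqrt{8k(k-1)(4k-3)}$ holds; averaging gives the claim.

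The main obstacle is essentially cosmetic: one has to convince oneself that this crude pointwise approach, which ignores any cancellation in $\sum_{u\neq v}\theta_{u,v}$, is sharp enough. A more delicate global analysis---expanding the square term by term, using the double count $\sum_{u,z\in U}\lambda_{u,z}^2=\sum_{w_1,w_2\in V}|N(w_1)\cap N(w_2)|^2\leq nk^3$ to control the diagonal, and passing to the average via Jensen applied to $\sqrt{\cdot}$---would yield slightly tighter asymptotic constants but is not required for the lemma as stated. Intuitively the harder of the two bounds is the general one, because the worst-case $\lambda_{u,z}=k$ is attained only in degenerate ``twin'' configurations (e.g.\ disjoint copies of $K_{k,k}$), which the proof does not need to rule out.
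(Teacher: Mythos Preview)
Your argument is correct, and it is actually cleaner than the paper's. Both proofs bound $\theta_{u,v}$ pointwise, but the paper does it via an edge-by-edge ``marginal contribution'' count: it partitions $N(u,v)$ into $N(u)\cap N(v)$, $N(u)\setminus N(v)$, $N(v)\setminus N(u)$, and bounds how much each edge from these sets into $N(N(u,v))\setminus\{u,v\}$ can add to $(\lambda_{u,z}+\lambda_{v,z})^2$ (at most $(2k)^2-(2k-2)^2=8k-4$ or $(2k-1)^2-(2k-2)^2=4k-3$, depending on the side). You replace all of this by the single H\"older-type inequality $\sum_z a_z^2\leq(\max_z a_z)\sum_z a_z$ together with the regularity identity $\sum_z a_z\leq 2k(k-1)$. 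In the girth-$\geq 6$ case the two routes coincide (both land on $\theta_{u,v}\leq 16k(k-1)$), but in the general case your intermediate bound $\theta_{u,v}\leq 16k^2(k-1)$ is in fact \emph{tighter} than the paper's $8k(k-1)(4k-3)$; you only relax it at the end to match the stated lemma. So your approach buys brevity and a sharper constant, at no cost.
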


\begin{proof}
Consider two nodes $u,v\in U$ of an arbitrary $k$-regular bipartite graph. We will show that $\theta_{u,v}$ is at most $8k(k-1)(4k-3)$. Consider the sets of nodes $N(u)\cap N(v)$, $N(u)\setminus N(v)$, and $N(v)\setminus N(u)$, and the edges connecting these nodes to $N(N(u,v))\setminus\{u,v\}$. Each edge from a node of $N(u)\cap N(v)$ to a node $z\in N(N(u,v))\setminus\{u,v\}$ contributes $2$ to the quantity $\lambda_{u,z}+\lambda_{v,z}$, which can be up to $2k$. Hence, each edge from a node of $N(u)\cap N(v)$ to a node $z\in N(N(u,v))\setminus\{u,v\}$ contributes at most $(2k)^2-(2k-2)^2=8k-4$ to the quantity $(\lambda_{u,z}+\lambda_{v,z})^2$ and there are $|N(u)\cap N(v)|(k-2)$ such edges. Similarly, each edge from a node of $N(u)\setminus N(v)$ and $N(v)\setminus N(u)$ to a node $z\in N(N(u,v))\setminus\{u,v\}$ contributes $1$ to the quantity $\lambda_{u,z}+\lambda_{v,z}$, which can be up to $2k-1$. Hence, each edge from a node of $N(u)\setminus N(v)$ or $N(v)\setminus N(u)$ to a node $z\in N(N(u,v))\setminus\{u,v\}$ contributes at most $(2k-1)^2-(2k-2)^2=4k-3$ to the quantity $(\lambda_{u,z}+\lambda_{v,z})^2$ and there are $2(k-|N(u)\cap N(v)|)(k-1)$ such edges. So, $\theta_{u,v}$ is bounded by $4$ times the total contributions to quantities $(\lambda_{u,z}+\lambda_{v,z})^2$ by the edges between $N(u,v)$ and $N(N(u,v))\setminus\{u,v\}$, i.e., by $4(|N(u)\cap N(v)|(k-2)(8k-4) + 2(k-|N(u)\cap N(v)|)(k-1)(4k-3)) \leq 8k(k-1)(4k-3)$.

Now, assume that the graph has girth at least $6$; this means that $\lambda_{u,z}+\lambda_{v,z}\leq 2$ for any node $z\in N(N(u,v))\setminus \{u,v\}$, otherwise $z$ would be in a $4$-cycle with either $u$ or $v$. We will show that $\theta_{u,v}\leq 16k(k-1)$. Each node $z\in N(N(u,v))\setminus \{u,v\}$ can be adjacent to either one node of $N(u)\cap N(v)$ or (exclusive) to at most one node of $N(u)\setminus N(v)$ and at most one node of $N(v)\setminus N(u)$. Among the nodes in $N(N(u,v))\setminus \{u,v\}$, denote by $D_2$ the ones that are adjacent to one node from $N(u)\setminus N(v)$ and to one node from $N(v)\setminus N(u)$. So, any node $z$ that is among the $|N(u)\cap N(v)|(k-2)$ neighbours of $N(u)\cap N(v)$ in $N(N(u,v))\setminus \{u,v\}$ or belongs to $D_2$ has $\lambda_{u,z}+\lambda_{v,z}=2$. Any node $z$ among the remaining $2(k-|N(u)\cap N(v)|)(k-1)-2|D_2|$ nodes of $N(N(u,v))\setminus \{u,v\}$ has $\lambda_{u,z}+\lambda_{v,z}=1$. Hence, $\theta_{u,v}=4(4(|N(u)\cap N(v)|(k-2)+|D_2|)+2(k-|N(u)\cap N(v)|)(k-1)-2|D_2|)=8k(k-1)+8(|N(u)\cap N(v)|(k-2)+|D_2|)-8|N(u)\cap N(v)|$. The second part of the lemma follows by observing that the quantity $|N(u)\cap N(v)|(k-2)+|D_2|$ is the number of nodes $z$ of $N(N(u,v))\setminus\{u,v\}$ with $\lambda_{u,z}+\lambda_{v,z}=2$ which cannot be higher than $k(k-1)$.
\end{proof}

The important step in the proof of Theorem \ref{thm:main} is to focus on two elements $a_r$ and $a_q$ with ranks (positions) $r<q$ in the ground truth and to bound from above the probability that the difference in their Borda scores is inconsistent to their rank difference. This will require to take care of several subtle dependencies among the random variables involved. We will do so by exploiting the beautiful theory of martingales and a well-known tail inequality about them. The necessary background from martingale theory is presented below; the interested reader can refer to the textbooks \cite{MU05} and \cite{MR95} for an introduction to martingales and their applications.
\begin{defn}
A sequence of random variables $Z_0, Z_1, ..., Z_m$ is a {\em martingale} with respect to a second sequence of random variables $X_1, X_2, ..., X_m$ if for every $i=1,...,m$, it holds that $\E{Z_i|X_1,...,X_i}=Z_{i-1}$. 
\end{defn}
The next definition provides a general way to define martingales associated with {\em any} random variable and was first used by Doob \cite{Doob53}.
\begin{defn}
Consider a random variable $W$ and a sequence of random variables $X_1,...,X_m$. Then, the sequence of random variables $Z_0,...,Z_m$ such that $Z_0=\E{W}$ and $Z_i=\E{W|X_1,...,X_i}$ for every $i=1, ..., m$ is a martingale, called a {\em Doob martingale}.
\end{defn}
We can now present a powerful tail inequality for martingales that is known as Azuma-Hoeffding inequality (see Azuma \cite{A67} and Hoeffding \cite{H63}).
\begin{lemma}[Azuma-Hoeffding inequality]\label{lem:ah}
Let $Z_0, Z_1, ..., Z_m$ be a martingale with $|Z_i-Z_{i-1}|\leq c_i$ for $i=1, ..., m$.
Then, for all $t\geq 0$, it holds that
\begin{align*}
\Pr[ Z_m - Z_0 \leq -t] \leq \exp\left(-\frac{t^2}{2 \sum_{i=1}^m c^2_i}\right).
\end{align*}
\end{lemma}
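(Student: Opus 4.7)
The plan is to prove the Azuma--Hoeffding inequality by the classical exponential-moment (Chernoff) method applied to the martingale difference sequence. Define $X_i = Z_i - Z_{i-1}$, so that $Z_m - Z_0 = \sum_{i=1}^m X_i$, each $|X_i| \leq c_i$ by hypothesis, and the martingale property gives $\E{X_i \mid X_1,\dots,X_{i-1}} = 0$. For any $\lambda > 0$, applying Markov's inequality to the non-negative random variable $e^{-\lambda(Z_m - Z_0)}$ yields
\[
\Pr[Z_m - Z_0 \leq -t] = \Pr\!\bigl[e^{-\lambda(Z_m - Z_0)} \geq e^{\lambda t}\bigr] \leq e^{-\lambda t}\,\E{e^{-\lambda \sum_{i=1}^m X_i}}.
\]
The strategy is to bound the moment generating function on the right by $\exp\!\bigl(\tfrac{\lambda^2}{2}\sum_{i=1}^m c_i^2\bigr)$ and then optimize $\lambda$.

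To control the joint MGF I would unroll conditional expectations through the tower property. Writing $e^{-\lambda \sum_{i=1}^m X_i} = e^{-\lambda \sum_{i=1}^{m-1} X_i}\cdot e^{-\lambda X_m}$ and conditioning on $X_1,\dots,X_{m-1}$, the first factor becomes measurable and the conditional expectation of the second factor is controlled by a single-variable bound (Hoeffding's lemma, stated next). Iterating this step $m$ times yields
\[
\E{e^{-\lambda \sum_{i=1}^m X_i}} \leq \prod_{i=1}^m e^{\lambda^2 c_i^2 / 2} = \exp\!\biggl(\frac{\lambda^2}{2}\sum_{i=1}^m c_i^2\biggr).
\]

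The required single-variable fact is Hoeffding's lemma: if $Y$ satisfies $\E{Y}=0$ and $|Y|\leq c$, then $\E{e^{\lambda Y}} \leq e^{\lambda^2 c^2/2}$ for every real $\lambda$. To prove it, I would use convexity of $y \mapsto e^{\lambda y}$ on $[-c,c]$ to write $e^{\lambda y} \leq \tfrac{c-y}{2c} e^{-\lambda c} + \tfrac{c+y}{2c} e^{\lambda c}$, take expectations (the mean-zero hypothesis eliminates the linear term) to get $\E{e^{\lambda Y}} \leq \cosh(\lambda c)$, and then compare the Taylor series $\cosh(x) = \sum_{k\geq 0} x^{2k}/(2k)!$ and $e^{x^2/2} = \sum_{k\geq 0} x^{2k}/(2^k k!)$, where the identity $(2k)! = 2^k k! \cdot \prod_{j=1}^k (2j-1)$ gives $\cosh(\lambda c) \leq e^{\lambda^2 c^2/2}$. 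The same bound applies conditionally to $X_i$ given $X_1,\dots,X_{i-1}$, which supplies the step used in the tower-property iteration above (note this also works with $-\lambda$ in place of $\lambda$, justifying the sign we need).

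Combining the two bounds, $\Pr[Z_m - Z_0 \leq -t] \leq \exp\!\bigl(-\lambda t + \tfrac{\lambda^2}{2}\sum_{i=1}^m c_i^2\bigr)$, and minimizing over $\lambda > 0$ by setting $\lambda = t/\sum_{i=1}^m c_i^2$ produces the claimed exponent $-t^2/(2\sum c_i^2)$. The main obstacle is really Hoeffding's lemma itself: the convexity step together with the $\cosh$-versus-Gaussian comparison is the one genuinely non-trivial ingredient; the rest is bookkeeping with Markov's inequality and iterated conditioning.
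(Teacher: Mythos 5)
Your proof is correct: it is the standard Chernoff-style argument, bounding the moment generating function of the martingale difference sequence via the tower property together with (a conditional version of) Hoeffding's lemma, and then optimizing the exponential parameter at $\lambda = t/\sum_{i=1}^m c_i^2$. All the steps check out, including the convexity bound $\E{e^{\lambda Y}} \leq \cosh(\lambda c)$ and the term-by-term comparison $(2k)! \geq 2^k k!$ giving $\cosh(x) \leq e^{x^2/2}$, and you correctly observe that only the one-sided conditions $\E{X_i \mid X_1,\dots,X_{i-1}} = 0$ and $|X_i| \leq c_i$ are needed. Note, however, that the paper does not prove this lemma at all --- it states it as a known result with citations to Azuma and Hoeffding --- so there is no in-paper proof to compare against; your write-up supplies the standard proof that those references contain.
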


We are now ready to show that the probability that the Borda score of a high-rank element is larger than the Borda score of a low-rank element is small. Importantly, it turns out that this probability decreases exponentially in terms of the rank difference. We will first study such phenomena under particular conditions on our bijection $\pi$.
\begin{lemma}\label{lem:Borda-distribution}
Let $u,v\in U$, and consider the two elements $a_r, a_q \in \A$ with ranks $r<q$ in the ground truth. Let $W_{r,q}$ be the random variable denoting the difference of the Borda score of $a_r$ minus the Borda score of $a_q$ and let $\Gamma_{u,v}^{r,q}$ be the event that $\pi(u)=a_r$ and $\pi(v)=a_q$. Then, 
$$\E{W_{r,q}|\Gamma_{u,v}^{r,q}} = \left(k(k-1)-\lambda_{u,v}\right)\frac{q-r-1}{n-2}+\lambda_{u,v}$$ 
and
$$\Pr[W_{r,q} \leq  0|\Gamma_{u,v}^{r,q}] \leq \exp\left(-\frac{\E{W_{r,q}|\Gamma_{u,v}^{r,q}}^2}{2\theta_{u,v}}\right).$$ 
\end{lemma}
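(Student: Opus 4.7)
My plan is to handle the two parts of the lemma separately: a direct expectation computation, followed by a Doob-martingale concentration argument via Azuma--Hoeffding.

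For the expected value, I will decompose $W_{r,q}$ by the bundles that contribute to the Borda scores of $a_r$ or $a_q$. The only relevant bundles are those incident to $u$ or $v$: the $\lambda_{u,v}$ bundles in $N(u)\cap N(v)$, the $k-\lambda_{u,v}$ bundles in $N(u)\setminus N(v)$, and the $k-\lambda_{u,v}$ bundles in $N(v)\setminus N(u)$. Conditional on $\Gamma_{u,v}^{r,q}$, the remaining $n-2$ elements are uniformly randomly assigned to the remaining $n-2$ nodes, so by symmetry the $k-1$ companions of $a_r$ in any ``$u$-only'' bundle form a uniformly random $(k-1)$-subset of $\A\setminus\{a_r,a_q\}$. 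Since each partial ranking is consistent with the ground truth, the Borda score of $a_r$ in such a bundle equals $1$ plus the number of companions with ground-truth rank exceeding $r$, whose expectation is $(k-1)(n-r-1)/(n-2)$. The same symmetry gives the expected Borda score of $a_q$ in any ``$v$-only'' bundle, and for bundles in $N(u)\cap N(v)$ the Borda-score gap of $a_r$ over $a_q$ equals $1$ plus the number of companions with rank strictly between $r$ and $q$, whose expectation is $(k-2)(q-r-1)/(n-2)$. Linearity of expectation together with a short algebraic simplification that collects the coefficient of $(q-r-1)/(n-2)$ yields exactly the stated formula.

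For the tail bound, I will construct a Doob martingale by progressively revealing $\pi$ on the nodes of $N(N(u,v))\setminus\{u,v\}$, since bundles not incident to $u$ or $v$ contain no element that can influence $W_{r,q}$. Ordering these nodes arbitrarily as $z_1,\dots,z_m$, I set $Z_0=\E{W_{r,q}|\Gamma_{u,v}^{r,q}}$ and $Z_i=\E{W_{r,q}|\Gamma_{u,v}^{r,q},\pi(z_1),\dots,\pi(z_i)}$, so that $Z_m=W_{r,q}$. Since the event $\{W_{r,q}\leq 0\}$ coincides with $\{Z_m-Z_0\leq -\E{W_{r,q}|\Gamma_{u,v}^{r,q}}\}$, it will suffice to establish the step-size bound $|Z_i-Z_{i-1}|\leq 2(\lambda_{u,z_i}+\lambda_{v,z_i})$: squaring and summing then gives $\sum_{i=1}^m c_i^2=\theta_{u,v}$ directly by the definition of $\theta_{u,v}$, and Lemma~\ref{lem:ah} applied with $t=\E{W_{r,q}|\Gamma_{u,v}^{r,q}}$ delivers the claimed exponential.

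I expect the step-size bound to be the main technical hurdle. The one-node sensitivity of $W_{r,q}$ is transparent: replacing the element placed at a single node $z$ changes the Borda score of $a_r$ by at most $1$ in each of the $\lambda_{u,z}$ bundles shared by $u$ and $z$, and likewise the Borda score of $a_q$ by at most $1$ in each of the $\lambda_{v,z}$ bundles shared by $v$ and $z$, so the per-node sensitivity is bounded by $\lambda_{u,z}+\lambda_{v,z}$. The subtlety is that $\pi$ is a bijection: the two conditional distributions obtained by revealing different values of $\pi(z_i)$ cannot be compared by a single-node swap but only via a transposition that simultaneously moves the element at some second node $w$ whose own sensitivity also contributes to $W_{r,q}(\pi)-W_{r,q}(\pi')$. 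Employing the standard transposition coupling between the two conditional distributions at each step of the martingale---and carefully controlling the contribution of the random partner $w$ so that it can be absorbed into the same per-node sensitivity scale---is what produces the factor of two in $c_i=2(\lambda_{u,z_i}+\lambda_{v,z_i})$. Making this permutation-coupling step rigorous is the delicate part of the argument; once it is in hand, plugging $t=\E{W_{r,q}|\Gamma_{u,v}^{r,q}}$ into Lemma~\ref{lem:ah} closes the proof.
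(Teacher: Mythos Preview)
Your overall plan---compute the conditional expectation directly, then set up the Doob martingale on $S=N(N(u,v))\setminus\{u,v\}$, establish the step bound $|Z_i-Z_{i-1}|\le 2(\lambda_{u,z_i}+\lambda_{v,z_i})$, and apply Azuma--Hoeffding with $t=\E{W_{r,q}|\Gamma_{u,v}^{r,q}}$---is exactly the route the paper takes. Your expectation computation is also equivalent to the paper's (it simply organises the sum by elements rather than by bundle type).

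The one genuine difference is in how the bounded-differences step is argued. You propose a transposition-coupling argument, with the factor $2$ explained as the price of the random partner node $w$ in the permutation coupling. The paper does \emph{not} argue this way: it writes $W_{r,q}$ explicitly as $\lambda_{u,v}+\sum_j\bigl(\lambda_{u,o(j)}\one{X_j>r}-\lambda_{v,o(j)}\one{X_j>q}\bigr)$, computes $Z_i-Z_{i-1}$ in closed form as a sum of two products, and bounds each factor directly. The paper's factor $2$ arises for a different reason than you suggest: from bounding the ``future-averaging'' term $\sum_{j>i}\lambda_{u,o(j)}/(n-i-2)\le k(k-1)/(n-|S|-2)\le 1$ (this is precisely where the standing hypothesis $n\ge 3k(k-1)+2$ is used) and then using $\max\{\lambda_{u,o(i)},1\}+\max\{\lambda_{v,o(i)},1\}\le 2(\lambda_{u,o(i)}+\lambda_{v,o(i)})$ since $\lambda_{u,o(i)}+\lambda_{v,o(i)}\ge 1$ for $o(i)\in S$.

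Your coupling route can be made to work, but as you yourself flag, the partner $w$ may land on some $z_j\in S$ with $\lambda_{u,z_j}+\lambda_{v,z_j}$ larger than $\lambda_{u,z_i}+\lambda_{v,z_i}$, so a naive pointwise bound fails; one has to average over $w$ (again using $n\ge 3k(k-1)+2$ to make that average $O(1)$), which yields a bound of the form $(\lambda_{u,z_i}+\lambda_{v,z_i})+O(1)\le C(\lambda_{u,z_i}+\lambda_{v,z_i})$ but not the constant $2$ on the nose. The paper's explicit computation sidesteps this subtlety entirely and delivers the stated constant cleanly.
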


\begin{proof}
We begin the proof by computing the expectation of the Borda scores. Element $a_r$ gets one point for each bundle it belongs to plus one additional point for each appearance of an element with rank higher than $r$ in the bundles $a_r$ belongs to. Assuming that $\pi(u)=a_r$ and $\pi(v)=a_q$, there are $\lambda_{u,v}$ appearances of $a_q$ in the bundles of $a_r$ and $k(k-1)-\lambda_{u,v}$ appearances of elements different than $a_r$ and $a_q$; each of them has probability $\frac{n-r-1}{n-2}$ to have higher rank than $r$. Hence, the expected Borda score of element $a_r$ is $k+\left(k(k-1)-\lambda_{u,v}\right)\frac{n-r-1}{n-2}+\lambda_{u,v}$. Similarly, element $a_q$ gets one point for each bundle it belongs to plus one additional point for each appearance of an element with rank higher than $q$. There are $k(k-1)-\lambda_{u,v}$ appearances of elements different than $a_r$ and $a_q$ in bundles of $a_q$ and each of them has rank higher than $q$ with probability $\frac{n-q}{n-2}$. Hence, the expected Borda score of element $a_q$ is $k+\left(k(k-1)-\lambda_{u,v}\right)\frac{n-q}{n-2}$, and the expectation of the difference $W_{r,q}$ is indeed
\begin{eqnarray*}
\E{W_{r,q}|\Gamma_{u,v}^{r,q}} &=& \left(k(k-1)-\lambda_{u,v}\right)\frac{q-r-1}{n-2}+\lambda_{u,v}. 
\end{eqnarray*}

Given $\Gamma_{u,v}^{r,q}$, define $S=N(N(u,v))\setminus\{u,v\}$ to be the set of nodes in $G$ that are at distance exactly $2$ from $u$ or $v$ (not including $u$ and $v$); notice that $|S| \leq 2k(k-1)$. Now, consider an arbitrary ordering $o:[|S|] \rightarrow S$ of the nodes of $S$ and let $X_i$ be the random variable denoting the rank of the element $\pi(o(i))$. Using the random variables $X_i$ and the random variable $W_{r,q}$, we define the Doob martingale $Z_0, Z_1, ..., Z_{|S|}$ such that $Z_0=\E{W_{r,q}|\Gamma_{u,v}^{r,q}}$ and $Z_i=\E{W_{r,q}|\Gamma_{u,v}^{r,q},X_1,...,X_i}$ (hence, given $\Gamma_{u,v}^{r,q}$, $W_{r,q} = Z_{|S|}$). The next technical lemma bounds the difference $|Z_i-Z_{i-1}|$ for $i=1, ..., |S|$. 

\begin{lemma}\label{lem:Bounded-differences}
For every $i=1,...,|S|$, it holds that $|Z_i-Z_{i-1}|\leq 2\left(\lambda_{u,o(i)}+\lambda_{v,o(i)}\right)$.
\end{lemma}

\begin{proof}
Throughout this proof, all random variables and probabilities are conditioned on the event $\Gamma_{u,v}^{r,q}$, even if, in order to simplify notation, we do not explicitly write so. 

For every node $w\in S$, denote by $\mu_{u,v,w}= |N(u) \cap N(v)\cap N(w)|$ the number of common neighbours between $u$, $v$, and $w$. We can now express $W_{r,q}$ using the following observations: the Borda score difference 
\begin{itemize}
\item increases for each appearance of element $a_q$ in the same bundle with $a_r$, 
\item for each appearance of element $\pi(o(j))$ in a bundle containing $a_r$ but not $a_q$ provided that the rank of $\pi(o(j))$ is higher than $r$, and 
\item for each appearance of an element 
$\pi(o(j))$ in a bundle containing both $a_r$ and $a_q$ provided that the rank of $\pi(o(j))$ is between $r$ and $q$, 
and 
\item decreases for each appearance of element $\pi(o(j))$ in a bundle containing $a_q$ but not $a_r$ provided that the rank of $\pi(o(j))$ is higher than $q$. 
\end{itemize}
Using our notation $\lambda_{u,v}$ and $\mu_{u,v,o(j)}$, we have
\begin{eqnarray*}
W_{r,q} &=& \lambda_{u,v}+\sum_{j=1}^{|S|}{\left(\lambda_{u,o(j)}-\mu_{u,v,o(j)}\right)\one{X_j>r}}
+\sum_{j=1}^{|S|}{\mu_{u,v,o(j)}\one{r<X_j<q}}\\
&& -\sum_{j=1}^{|S|}{\left(\lambda_{v,o(j)}-\mu_{u,v,o(j)}\right)\one{X_j>q}}\\
&=& \lambda_{u,v}+\sum_{j=1}^{|S|}{\left(\lambda_{u,o(j)}\one{X_j>r}-\lambda_{v,o(j)}\one{X_j>q}\right)}.
\end{eqnarray*}

Denoting by $\X_i$ the sequence $X_1, ..., X_i$, we have that the difference $|Z_i-Z_{i-1}|$ is
\begin{eqnarray}\label{eq:1}\nonumber
Z_i-Z_{i-1} &=& \sum_{j=i}^{|S|}\lambda_{u,o(j)}\left(\Pr[X_j>r|\X_i]-\Pr[X_j>r|\X_{i-1}]\right) \\
&& -\sum_{j=i}^{|S|}\lambda_{v,o(j)}\left(\Pr[X_j>q|\X_i]-\Pr[X_j>q|\X_{i-1}]\right).
\end{eqnarray}

Once the values of $X_1, ..., X_{i-1}$ are determined, let $x$ and $y$ be the number of available ranks from $[n]\setminus\{r,q,X_1,...,X_{i-1}\}$ that are between $r$ and $q$ and higher than $q$, respectively. Hence, for $j=i, ..., |S|$, we have
$$\Pr[X_j>r|\X_{i-1}]=\frac{x+y}{n-i-1},$$
$$\Pr[X_j>q|\X_{i-1}]=\frac{y}{n-i-1},$$
and for $j=i+1, ..., |S|$, we have
$$\Pr[X_j>r|\X_i] = \frac{x+y-\one{X_i>r}}{n-i-2},$$
$$\Pr[X_j>q|\X_i] = \frac{y-\one{X_i>q}}{n-i-2}.$$
Now, (\ref{eq:1}) yields
\begin{eqnarray*}
Z_i-Z_{i-1} &=& \lambda_{u,o(i)}\left(\one{X_i>r}-\frac{x+y}{n-i-1}\right)-
\lambda_{v,o(j)}\left(\one{X_i>q}-\frac{y}{n-i-1}\right)\\
&&+\sum_{j=i+1}^{|S|}{\lambda_{u,o(j)}\left(\frac{x+y-\one{X_i>r}}{n-i-2}-\frac{x+y}{n-i-1}\right)}\\
&&+\sum_{j=i+1}^{|S|}{\lambda_{v,o(j)}\left(\frac{y-\one{X_i>q}}{n-i-2}-\frac{y}{n-i-1}\right)}\\
&=& \left(\lambda_{u,o(i)}-\frac{\sum_{j=i+1}^{|S|}{\lambda_{u,o(j)}}}{n-i-2} \right)\left(\one{X_i>r} -\frac{x+y}{n-i-1} \right) \\
&& + \left(\lambda_{v,o(i)}-\frac{\sum_{j=i+1}^{|S|}{\lambda_{v,o(j)}}}{n-i-2} \right)\left(\frac{y}{n-i-1}-\one{X_i>q}\right)
\end{eqnarray*}
The second and fourth parenthesis in the above expression are obviously between $-1$ and $1$. Recall that $\sum_{j=i+1}^{|S|}{\lambda_{u,o(j)}} \leq k(k-1)$ and $\sum_{j=i+1}^{|S|}{\lambda_{v,o(j)}} \leq k(k-1)$. Also, by the definition of $S$, $\lambda_{u,o(i)}+\lambda_{v,o(i)}\geq 1$, for every $i=1,...,|S|$. Combined with our assumption that $n\geq 3k(k-1)+2$, these properties imply that the first parenthesis is between $-\max\{\lambda_{u,o(i)},1\}$ and $\max\{\lambda_{u,o(i)},1\}$, and the third one is between $-\max\{\lambda_{v,o(i)},1\}$ and $\max\{\lambda_{v,o(i)},1\}$. The lemma follows since $|\max\{\lambda_{u,o(i)},1\}+\max\{\lambda_{v,o(i)},1\}| \leq 2(\lambda_{u,o(i)}+\lambda_{v,o(i)})$.
\end{proof}

Lemma \ref{lem:Borda-distribution} then follows by applying the Azuma-Hoeffding inequality (Lemma \ref{lem:ah}) with $t=\E{W_{r,q}|\Gamma_{u,v}^{r,q}}$ and using Lemma \ref{lem:Bounded-differences} to bound the difference $|Z_i-Z_{i-1}|$.
\end{proof}

The proof of Theorem \ref{thm:main} can now be completed using Lemmas \ref{lem:bounds} and \ref{lem:Borda-distribution}.

\begin{proof}[Proof of Theorem \ref{thm:main}]
Consider the pair of elements with true ranks $r$ and $q$ so that $r<q$. The correct pairwise relation between the two elements will be recovered when the Borda score of the low-rank element is higher than the Borda score of the high-rank one (there is the additional case where the two elements are tied and the tie is resolve in favour of the low-rank element but we will ignore this case; this will only make our result stronger). Again, $W_{r,q}$ will be the random variable denoting the difference between the Borda scores of the low- and high-rank elements. Then, by Lemma \ref{lem:Borda-distribution} the probability that the relation between the elements with ranks $r$ and $q$ is correctly recovered is
\begin{eqnarray*}
\Pr[W_{r,q}>0] &=& 1-\sum_{u,v\in U}{\left(\Pr[W_{r,q}\leq 0|\Gamma_{u,v}^{r,q}]\Pr[\Gamma_{u,v}^{r,q}]\right)}\\
&\geq & 1- \frac{1}{n(n-1)}\sum_{u,v\in U}{\exp\left(-\frac{\E{W_{r,q}|\Gamma_{u,v}^{r,q}}^2}{2\theta_{u,v}}\right)}\\
&=& 1- \frac{1}{n(n-1)}\sum_{u,v\in U}{e^{-\left(\beta(u,v) y(q-r)+\delta(u,v)\right)^2}},
\end{eqnarray*}
where $\beta(u,v) = \frac{k(k-1)-\lambda_{u,v}}{\sqrt{2\theta_{u,v}}}$, $\delta(u,v)=\frac{\lambda_{u,v}}{\sqrt{2\theta_{u,v}}}$, and $y(t)=\frac{t-1}{n-2}$. Now, denoting the expected number of correctly recovered pairwise relations by $C$, we have
\begin{eqnarray*}
C &=& \sum_{r=1}^{n-1}{\sum_{q=r+1}^n{\Pr[W_{r,q}>0]}}\\
&\geq & \sum_{r=1}^{n-1}{\sum_{q=r+1}^n{\left(1- \frac{1}{n(n-1)}\sum_{u,v\in U}{e^{-\left(\beta(u,v) y(q-r)+\delta(u,v)\right)^2}}\right)}}\\
&=& \frac{n(n-1)}{2} - \frac{1}{n(n-1)}\sum_{u,v\in U}{\sum_{d=1}^{n-1}{(n-d)e^{-\left(\beta(u,v) y(d)+\delta(u,v)\right)^2}}}\\
&\geq & \frac{n(n-1)}{2} - \sum_{u,v\in U}{\int_0^1{(1-y)e^{-(\beta(u,v) y + \delta(u,v))^2} \ud{y}}}.
\end{eqnarray*}
We will estimate the (Gaussian) integral using the following claim.
\begin{claim}\label{claim:gaussian}
Let $\beta>0$ and $\delta\geq 0$. Then, $\int_0^1{(1-y)e^{-(\beta y+\delta)^2}\ud{y}} \leq \frac{\beta+\delta}{2\beta^2}\sqrt{\pi}$.
\end{claim}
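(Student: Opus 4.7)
The plan is to reduce the integral to a standard Gaussian computation via a linear substitution, then split the integrand so that the only surviving term is bounded by half of $\int_{-\infty}^\infty e^{-u^2}\ud u = \sqrt{\pi}$.

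First I would substitute $u=\beta y+\delta$, so that $\ud u=\beta\,\ud y$, the interval $[0,1]$ maps to $[\delta,\beta+\delta]$, and the factor $1-y$ becomes $(\beta+\delta-u)/\beta$. The integral rewrites as
\[
\int_0^1 (1-y)\,e^{-(\beta y+\delta)^2}\ud y \;=\; \frac{1}{\beta^2}\int_\delta^{\beta+\delta}(\beta+\delta-u)\,e^{-u^2}\ud u.
\]

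Next I would split this into two pieces,
\[
\frac{\beta+\delta}{\beta^2}\int_\delta^{\beta+\delta}e^{-u^2}\ud u \;-\; \frac{1}{\beta^2}\int_\delta^{\beta+\delta} u\,e^{-u^2}\ud u.
\]
The second integral equals $\tfrac{1}{2}\bigl(e^{-\delta^2}-e^{-(\beta+\delta)^2}\bigr)$, which is nonnegative because $\delta\geq 0$ and $\beta>0$ imply $\delta^2\leq(\beta+\delta)^2$. Subtracting a nonnegative quantity can only decrease the expression, so dropping this term yields an upper bound.

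For the remaining piece I would use the evenness of $e^{-u^2}$ and the fact that $\int_0^\infty e^{-u^2}\ud u=\tfrac{1}{2}\sqrt{\pi}$ (which exists independently of the endpoints $\delta$ and $\beta+\delta$, both in $[0,\infty)$), to conclude
\[
\int_\delta^{\beta+\delta} e^{-u^2}\ud u \;\leq\; \int_0^\infty e^{-u^2}\ud u \;=\; \frac{\sqrt{\pi}}{2}.
\]
Combining the two bounds gives $\int_0^1(1-y)e^{-(\beta y+\delta)^2}\ud y \leq \frac{\beta+\delta}{2\beta^2}\sqrt{\pi}$, as required. There is no real obstacle here; the only thing to watch is the sign of the $u\,e^{-u^2}$ piece, which is why the hypothesis $\delta\geq 0$ is needed — otherwise the $u$-term could flip sign and the simple dropping argument would fail.
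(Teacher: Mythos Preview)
Your proof is correct and follows essentially the same approach as the paper's: both perform the linear substitution $u=\beta y+\delta$, split the integrand into a constant-times-$e^{-u^2}$ piece and a $u\,e^{-u^2}$ piece, discard the latter (which is nonnegative), and bound the former by $\sqrt{\pi}/2$. The paper phrases the last step via $\erf(\beta+\delta)-\erf(\delta)\leq 1$ after writing out the exact antiderivative, whereas you bound $\int_\delta^{\beta+\delta}e^{-u^2}\ud u\leq\int_0^\infty e^{-u^2}\ud u$ directly; these are the same inequality, and your version is slightly more streamlined.
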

\begin{proof}
Denote by $\erf(y)=\frac{2}{\sqrt{\pi}}\int_0^y{e^{-t^2}\ud{t}}$ the error function. Then, we can verify by tedious calculations that
\begin{eqnarray*}
\int_0^1{(1-y)e^{-(\beta y+\delta)^2}\ud{y}} &=& \frac{\beta+\delta}{2\beta^2}\sqrt{\pi}\left(\erf(\beta+\delta)-\erf(\beta)\right)+\frac{1}{2\beta^2}\left(e^{-(\beta+\delta)^2}-e^{-\beta^2}\right)\\
&\leq & \frac{\beta+\delta}{2\beta^2}\sqrt{\pi},
\end{eqnarray*}
where the inequality follows since the error function $\erf(y)$ takes values in $[0,1]$ when $y\geq 0$.
\end{proof}
Now, we use Claim \ref{claim:gaussian} and the facts $\beta(u,v)=\frac{k(k-1)-\lambda_{u,v}}{\sqrt{2\theta_{u,v}}}\leq \frac{k(k-2)}{\sqrt{2\theta_{u,v}}}$ and $\delta(u,v)=\frac{\lambda_{u,v}}{\sqrt{2\theta_{u,v}}}$ to obtain
\begin{eqnarray*}
C &\geq & \frac{n(n-1)}{2}-\sum_{u,v\in U}{\frac{\beta(u,v)+\delta(u,v)}{2\beta(u,v)^2} \sqrt{\pi}}\\
&\geq & \frac{n(n-1)}{2}-\frac{k-1}{k(k-2)^2}\sqrt{\frac{\pi}{2}}\sum_{u,v\in U}{\sqrt{\theta_{u,v}}}\\
&=& \frac{n(n-1)}{2}\left(1-\frac{k-1}{k(k-2)^2}\sqrt{2\pi}\eta(G)\right).
\end{eqnarray*}
Now, the theorem follows by Lemma \ref{lem:bounds}. Recall that $\eta(G)$ is at most $\sqrt{8k(k-1)(4k-3)}$ for every $k$-regular bipartite graph $G$ and at most $4\sqrt{k(k-1)}$ when $G$ has girth at least $6$. Using the assumption that $k\geq 3$, we obtain that the rightmost parenthesis in the above expression becomes at least $1-\frac{48\sqrt{2\pi}}{\sqrt{k}}$ and $1-\frac{16\sqrt{3\pi}}{k}$, respectively.
\end{proof}

\section{Experimental evaluation}\label{sec:exp}
We now describe two sets of experiments that we have conducted.\footnote{All experiments presented in this section have been conducted in an Intel 12-core i7 machine with 32Gb of RAM running Windows 7. Our methods have been implemented in Matlab R2013a.} In the first one, we have studied perfect grading with Borda and RSD. We have considered three different types of bundle graphs. The first type is that of random $k$-regular bipartite graphs. We build these graphs by picking $k$ perfect matchings in the complete bipartite graph $K_{n,n}$ as follows. For each node of $K_{n,n}$ in --say-- the upper\footnote{Consider the graph with a bipartition into an upper and lower set of nodes like in Figure~\ref{fig:example1}.} node side, we select one edge among its incident ones uniformly at random. We remove this edge from $K_{n,n}$ and continue for the remaining nodes; this defines a random perfect matching. We repeat the above procedure $k$ times. If a node at the upper side becomes isolated before the completion of the above procedure, we repeat from scratch. Otherwise, the set of edges that have been removed constitutes the bundle graph. The second type of graphs consists of many components of small girth-$6$ graphs. For $k=p+1$, where $p$ is a prime, we use the $k$-regular bipartite graph with $k^2-k+1$ nodes per side whose construction is described in Section \ref{sec:prelim} and which was proved to be order-revealing in Lemma \ref{lem:ORGS-upper}. The bundle graph consists of multiple disconnected copies of this graph. Similarly, the third type of bundle graphs contains copies of the complete bipartite graph $K_{k,k}$ (possibly, containing one small non-complete $k$-regular bipartite graph if $k$ does not divide $n$). The selection of highly disconnected bundle graphs is intentional; these graphs are in a sense extreme (within their category) and can challenge our methods.

Table \ref{tab:perfect} depicts the data (percentage of correctly recovered pairwise relations) from the execution of Borda and RSD on $18$ distinct triplets of graph type and values\footnote{In all experiments reported here, $n$ equals or is very close to $1000$. This is because the results are essentially identical when significantly higher values of $n$ are used (up to $10,000$) and since the value of $1000$ has allowed us to complete our experiments in a reasonable time frame.} for the parameters $n$ and $k$. The data in the column labelled ``random $k$-regular'' show the average performance of Borda and RSD using $50$ random bundle graphs. A different random permutation is used each time in order to assign elements to nodes. For graphs of the second and third type, one graph is used for each pair of values for $n$ and $k$. For example, the data entries in the columns labeled ``girth-$6$'' and ``copies of $K_{k,k}$'' in the line with $k=3$ and $n=1001$ correspond to the performance of Borda and RSD on a girth-$6$ bundle graph which consists of $143$ copies of the $(7,3)$-bundle graph of Figure \ref{fig:example1}, and on a third-type graph that consists of $332$ copies of $K_{3,3}$ and one more $3$-regular graph with $5$ nodes per side. Again, the data are average performance values from $50$ executions; in each execution, a different random assignment of the elements to the nodes of the bundle graph is used. 

\begin{table*}[ht]
\centering
\begin{tabular}{||c c|c c|c c|c c||}
\hline
\multicolumn{2}{||c|}{graph} & \multicolumn{2}{|c|}{random $k$-regular} & \multicolumn{2}{|c|}{girth-$6$} & \multicolumn{2}{|c||}{copies of $K_{k,k}$} \\\hline
$k$ & $n$   & Borda & RSD & Borda & RSD & Borda & RSD \\
\hline\hline
$2$ & $1002$ & $73.3$ & $62.7$ & $73.5$ & $60.3$ & $66.8$ & $56.8$\\ 
\hline 
$3$ & $1001$ & $83.0$ & $77.2$ & $83.2$ & $66.0$ & $73.1$ & $60.2$ \\
\hline
$4$ & $1001$ & $87.5$ & $86.8$ & $87.7$ & $68.7$ & $77.1$ & $62.2$ \\ 
\hline 
$6$ & $1023$ & $92.0$ & $94.6$ & $92.1$ & $72.7$ & $81.6$ & $65.2$ \\ 
\hline 
$8$ & $1026$ & $94.2$ & $97.2$ & $94.1$ & $72.8$ & $84.3$ & $66.5$ \\ 
\hline 
$12$ & $1064$ & $96.3$ & $98.9$ & $96.6$ & $76.0$ & $87.3$ & $68.5$\\ 
\hline
\end{tabular} 
\caption{Performance of Borda and RSD with perfect grading on different bundle graphs of similar size.}
\label{tab:perfect}
\end{table*}

The results for Borda complement our theoretical analysis from Section \ref{sec:borda}. Indeed, the Borda-columns with bundle graphs of the second and third type indicate that the fraction of correctly recovered pairwise relations follows patterns of $1-\bigO(1/k)$ and $1-\bigO(1/\sqrt{k})$, respectively. Interestingly, the constants hidden in the $\bigO$ notation are significantly smaller than the theoretical constants $16\sqrt{3\pi}$ and $48\sqrt{2\pi}$, respectively. The results from the execution of Borda on random bundle graphs shows a pattern of $1-\bigO(1/k)$ as well, albeit with a slightly higher constant hidden in the $\bigO$ notation. We believe that this can be proved by extending our analysis in Section \ref{sec:borda}. Even though we have not managed to prove that the quantity $\eta(G)$ is $\bigO(k^2)$ for these graphs, we strongly believe that this is the case.

RSD has poor performance on bundle graphs of the second and third type. This can be easily explained by recalling that these bundle graphs consist of small connected components. Even though all pairwise relations between elements assigned to nodes of the same component are correctly recovered, the vast majority of the pairwise relations are between elements that are assigned to different components. The probability that such a relation will be recovered correctly is only $1/2$. This explains the small percentages in the second and third RSD-columns. 

In contrast, the first RSD-column (for random bundle graphs) shows a very interesting pattern. RSD is clearly worse than Borda for values of $k$ up to $4$ and becomes better as $k$ increases further. Actually, this is more apparent in Figure \ref{fig:pattern} where Borda and RSD are compared in $(n,k)$-bundle graphs for all values of $k$ from $2$ up to $25$ (and $n=1000$). Each data point in Figure \ref{fig:pattern} corresponds to the average performance among $50$ executions. Here, we can again recognize the $1-\bigO(1/k)$ pattern for Borda that was observed in Table \ref{tab:perfect} and we further conjecture an even better pattern of $1-\bigO(1/k^2)$ for RSD. Proving such a statement formally seems to be a challenging task. 
 
\begin{figure}[ht]
\centering
\includegraphics[trim=100 260 100 280, clip=true, scale=0.5]{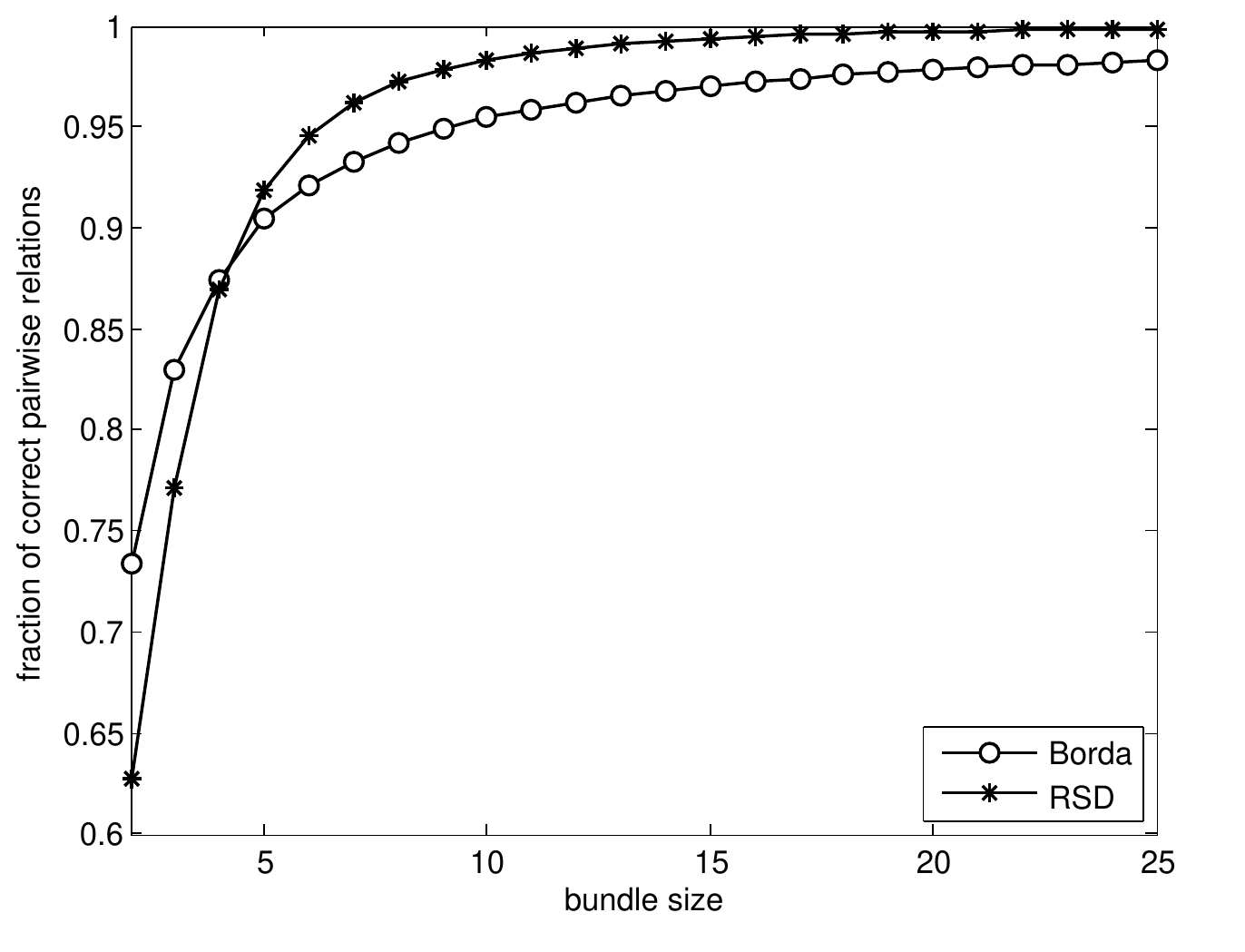}
\caption{Borda {\em vs.} RSD with perfect grading and bundle size ranging from $2$ to $25$.}
\label{fig:pattern}
\end{figure}

In a second set of experiments, we have studied imperfect grading. Now, we do not assume that the partial rankings are consistent to the ground truth any more. Instead, we have implemented generators of noisy rankings that may differ from the ground truth. In particular, we assume that each student has a quality that affects her position in the ground truth but also her ability to grade. First, the ground truth is the ranking of the elements in decreasing order of quality. Then, the ability of a student to rank the elements in a bundle depends on her quality $q$ and is modelled by the following process. For every pair of elements $a$ and $b$ in the bundle that is ranked as $a\succ b$ in the ground truth, decide the correct pairwise relation with probability $q$ and the opposite relation with probability $1-q$. If this process creates a circular pairwise relation, we repeat the whole process from scratch. Otherwise, the output induces a ranking in the obvious way; this ranking is the one computed by the student. Clearly, a student of quality $1$ will always produce a ranking that is consistent to the ground truth while a student of quality $1/2$ will produce a totally random ranking. This model was proposed by Condorcet in the 18th century; today, it is known as the Mallows model \cite{Mall57}.

In our experiments, we use different noise levels that indicate the range of student qualities. For example, a noise level of 30\% means that the qualities of the students are drawn uniformly at random from the interval $[0.7,1]$. We use random bundle graphs for different values of $k$ and besides Borda and RSD, we have also consider Markov chain-based aggregation methods. Dwork et al.~\cite{DKN+01} have studied a series of such methods; we describe the most powerful among them (even though we have experimented with a lot of variations of all the methods presented in \cite{DKN+01}), which is known as MC4. MC4 defines a Markov chain (or random walk) over the elements and ranks them in decreasing order of their probabilities in the stationary distribution of this chain. The transition matrix of the Markov chain is defined as follows: when at an element $a$, pick an element $b$ uniformly at random; if the number of partial rankings where $b$ is ranked above $a$ is higher than the number of partial rankings where $a$ is ranked above $b$, we have a transition to element $b$, otherwise we stay with element $a$.

Table \ref{tab:noisy} presents experimental data from the execution of Borda, RSD, and MC4 with random bundles for different values of the bundle size parameter and noise levels ranging from $50\%$ to perfect grading. RSD has poor performance for high noise levels and small values of $k$. For non-zero noise levels, Borda has the best performance. MC4 and RSD are good choices only in the case of perfect grading, with RSD outperforming MC4 for the high values of $k=8$ and $12$. Overall, our experiments suggest that Borda is extremely robust. Note that there are some values missing from Table \ref{tab:noisy}; this is due to the (exponential-time) implementation of Mallows generator which ``takes forever'' to come up with a set of non-circular pairwise relations that induces a ranking, when both $k$ and the noise level are high.

\begin{table*}[ht]
\centering
\begin{tabular}{||c|c c c|c c c|c c c||}
\hline
      & \multicolumn{3}{|c|}{$k=5$} & \multicolumn{3}{|c|}{$k=8$} & \multicolumn{3}{|c||}{$k=12$} \\\hline
noise level& Borda & RSD & MC4 & Borda & RSD & MC4 & Borda & RSD & MC4 \\
\hline\hline
$50$ & $81.6$ & $70.2$ & $78.4$ & $88.3$ & $74.0$ & $84.3$ & \#\#.\# & \#\#.\# & \#\#.\#\\ 
\hline 
$40$ & $84.9$ & $75.1$ & $81.2$ & $91.1$ & $80.1$ & $86.5$ & \#\#.\# & \#\#.\# & \#\#.\#\\
\hline
$30$ & $87.1$ & $80.0$ & $83.7$ & $92.6$ & $85.4$ & $88.3$ & \#\#.\# & \#\#.\# & \#\#.\#\\ 
\hline 
$20$ & $88.6$ & $84.2$ & $86.0$ & $93.5$ & $89.6$ & $89.8$ & $95.5$ & $92.2$ & $92.6$\\ 
\hline
$10$ & $89.6$ & $88.4$ & $88.8$ & $93.9$ & $93.2$ & $91.2$ & $96.1$ & $95.7$ & $93.6$\\ 
\hline 
$0$ & $90.4$ & $92.0$ & $92.7$ &$94.2$ & $97.2$ & $96.4$ & $96.2$ & $98.9$ & $97.8$\\ 
\hline
\end{tabular} 
\caption{Performance of Borda, RSD, and MC4 with random bundle graphs of size $1000$ and noise levels ranging from $50\%$ to perfect grading.}
\label{tab:noisy}
\end{table*}

We conclude by examining how sharply concentrated around the expectations the outcomes of the above experiments are. In Figure \ref{fig:spread}, we have plotted the fractions of correctly recovered pairwise relations obtained by Borda and RSD in the two extreme cases of perfect grading and noise level of $50\%$. Each figure contains data from $500$ executions (a random bundle graph and a random element-to-node assignment defines each execution) with $n=1000$ and $k=8$. The spread of fractions of correctly recovered pairwise relations achieved by Borda is almost the same in both cases. In contrast, RSD has a very high spread when the noise level is high (observe the long and narrow form of the left plot in Figure \ref{fig:spread}) while it is only marginally better than Borda in the perfect grading case. In conclusion, Borda appears to be robust with respect to this metric as well.

\begin{figure*}[ht]
\centering
\begin{subfigure}{0.43\textwidth}
	\caption{noise level of $50\%$}
    \includegraphics[trim=100 260 100 280, clip=true, width=\textwidth]{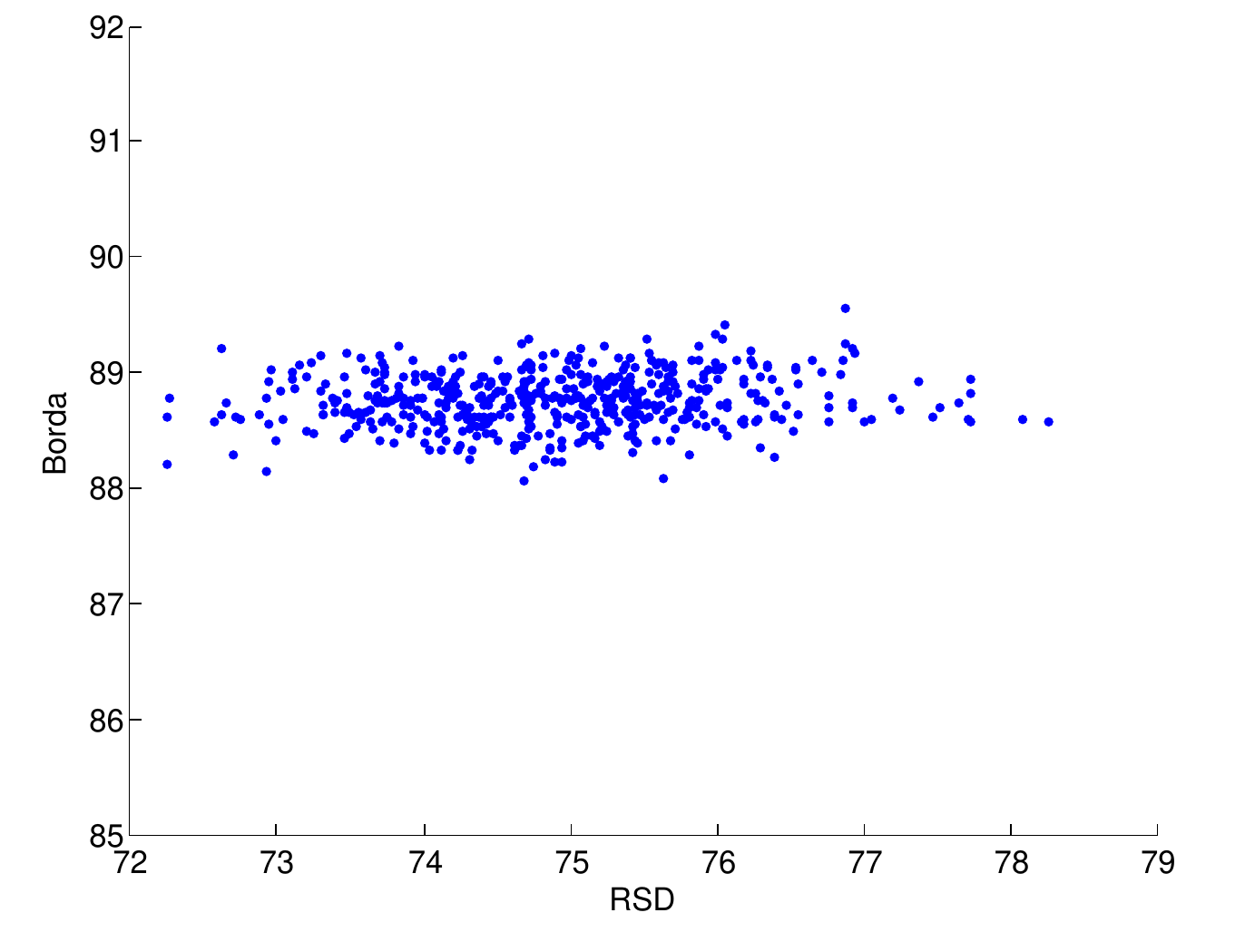}
\end{subfigure}%
\begin{subfigure}{0.43\textwidth}
	\caption{perfect grading}
    \includegraphics[trim=100 260 100 280, clip=true, width=\textwidth]{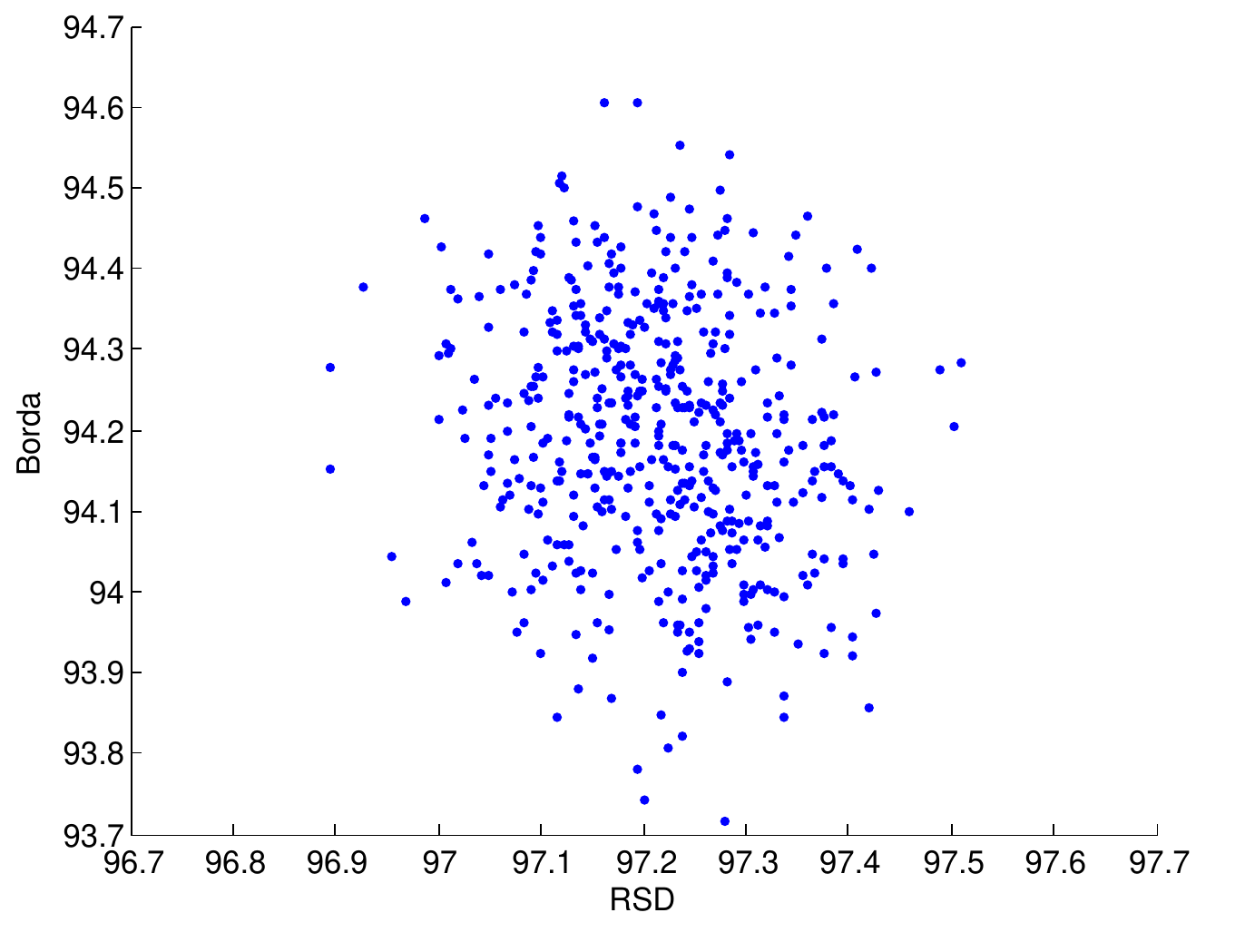}
\end{subfigure}
\caption{A comparison of Borda and RSD in $500$ executions for two different noise levels ($n=1000$, $k=8$).}
\label{fig:spread}
\end{figure*}

\section{Discussion}\label{sec:discussion}
Let us conclude by discussing some aspects of our work and possible future directions. Even though our analysis of Borda is targeted to the perfect grading case, we believe that our martingale-based arguments could be extended to handle imperfect grading under the Mallows noise model that we use in our experiments. This requires taking care of even more dependencies but we are confident that martingale theory will be useful here as well. We plan to consider extending our analysis in this direction in follow-up versions of this work.

Besides Borda, we have attempted a theoretical analysis of RSD as well. Here, our starting point has been to exploit the developments in the degree/diameter problem \cite{MS13} and use a diameter-$5$ low-degree bipartite graph as a bundle graph. The important property this graph has is that for every pair of nodes $u$ and $v$ of the node set $U$, these nodes either have a common neighbour in $V$ or there is another (intermediate) node $z$ in $U$ that has a common neighbour with $u$ and another common neighbour with $v$. Hence, in the perfect grading case, the pairwise relation between the elements $a$ and $b$ that are assigned to nodes $u$ and $v$ can be indirectly learnt during the serial phase through the pairwise relations of $a$ and $b$ with the element $c$ that is assigned to node $z$, provided that $c$ is ranked between $a$ and $b$ in the true ranking. Furthermore, if the bundle graph had more than one disjoint paths between any pair of nodes in $U$ (and more than one intermediates for any pair of nodes), the probability that the relation between two elements can be learnt correctly would be very high, provided that these elements have a relatively large rank difference in the true ranking. Unfortunately, even though some theoretical guarantees can indeed be formally proved in this way, the bundle graphs required have degree that strongly depends on the number of elements. So, this approach fails to explain the performance of RSD that we observed experimentally. Instead, one should reason about pairwise relations that can be learnt indirectly through long chains of intermediate elements. Unfortunately, exploiting such arguments seems elusive at this point.

In our experimental work, we have implemented and tested many more aggregation rules than the ones presented in Section \ref{sec:exp}. These include rules that put more weight on the partial rankings of low-rank (i.e., good) students. Such rules are usually defined using Markov chains that are variations of PageRank \cite{PBM+99} (such as the PeerRank method in \cite{Walsh14}), where the idea is that the confidence about the quality of a student depends on the performance of her graders (and this is reflected in the definition of the transition matrix of the Markov chain). Unfortunately, we have not observed any significant improvement compared to the rules considered in Section~\ref{sec:exp}. We believe that this can be explained by the fact that $k$ is a small constant.

In future work, we would also like to consider more realistic noise models that generalize  Mallows (see, e.g., \cite{FV86,LB11b,RJ14}) and ranking models that are inherently associated with cardinal utilities such as the generalized random utility model of Soufiani et al. \cite{SPX13} (see also the book of \cite{Liu11} and the references therein). Of course, it is important to perform real-world experiments (with students in the classroom or with participants in real MOOCs, if possible) in order to justify our methods and determine the noise model that is closest to practice.

\paragraph{Acknowledgements.} We would like to thank Stratis Gallopoulos and Steve Vavasis for discussions on early stages of this work, and Panagiotis Kanellopoulos and Nisarg Shah for technical comments and remarks.

\bibliography{moocs.arxiv}  

\end{document}